
\documentclass[twoside,leqno,twocolumn]{article}
\usepackage{ltexpprt}

\usepackage{afterpage} 

\usepackage{tcolorbox}

\usepackage[square,comma,numbers,sort]{natbib}
\setlength{\bibsep}{0.0pt}

\usepackage{mathtools} 
\usepackage{amssymb} 
\usepackage{bm} 

\usepackage{etoolbox} 
\AtEndEnvironment{proof}{\null\hfill\ensuremath{\Box}}%

\newproof{BRE}{The Base Rate Example}
\newproof{PE}{The Peculiar Example}

\usepackage{graphicx}
\usepackage{subfig}
\usepackage{hyperref}
\usepackage{booktabs}

\setlength{\textfloatsep}{5pt}

\let\Pr\relax
\DeclareMathOperator{\Pr}{\mathbb{P}}
\DeclareMathOperator{\I}{\mathbb{I}} 
\DeclareMathOperator{\E}{\mathbb{E}}
\DeclareMathOperator{\Dir}{Dir}
\DeclareMathOperator{\diag}{diag}
\DeclareMathOperator{\Var}{Var}

\newcommand{\R}{\mathbb{R}} 


\title{\Large A Bayesian Approach for Accurate Classification-Based Aggregates}







\author{
    Q.A. Meertens \thanks{Corresponding author; q.a.meertens@uva.nl}
    \thanks{CeNDEF, University of Amsterdam}
    \thanks{LCDS, Leiden University}
    \thanks{Statistics Netherlands, The Hague}
    \and
    C.G.H. Diks
    \footnotemark[2]
    \thanks{IAS, University of Amsterdam}
    \hspace{1pt}
    \thanks{Tinbergen Institute, Amsterdam}
    \and
    H.J. van den Herik
    \footnotemark[3]
    \and
    F.W. Takes
    \thanks{LIACS, Leiden University}
    \hspace{1pt}
    \thanks{CORPNET, University of Amsterdam}
}
\date{}

\begin{document}

\maketitle







\pagenumbering{arabic}
\setcounter{page}{1}

\afterpage{\cfoot{\thepage}}

\begin{abstract}
    \small\baselineskip=9pt
    In this paper, we study the accuracy of values aggregated over classes predicted by a classification algorithm. The problem is that the resulting aggregates (e.g., sums of a variable) are known to be biased. The bias can be large even for highly accurate classification algorithms, in particular when dealing with class-imbalanced data. To correct this bias, the algorithm's classification error rates have to be estimated. In this estimation, two issues arise when applying existing bias correction methods. First, inaccuracies in estimating classification error rates have to be taken into account. Second, impermissible estimates, such as a negative estimate for a positive value, have to be dismissed. We show that both issues are relevant in applications where the true labels are known only for a small set of data points. We propose a novel bias correction method using Bayesian inference. The novelty of our method is that it imposes constraints on the model parameters. We show that our method solves the problem of biased classification-based aggregates as well as the two issues above, in the general setting of multi-class classification. In the empirical evaluation, using a binary classifier on a real-world dataset of company tax returns, we show that our method outperforms existing methods in terms of mean squared error.
\end{abstract}

\vspace{\baselineskip} \noindent \textbf{Keywords:} Bayesian inference, classification errors, aggregates, misclassification bias, machine learning


\section{Introduction}\label{sec_intro}

Aggregation after automated classification naturally occurs in a wide range of data mining applications. Classification-based aggregation even is the most common data operation in some research fields. Therefore, comprehending the effect of classification errors on the accuracy of the resulting aggregates is essential. The aim of this paper is to improve the accuracy of such aggregates by reducing their statistical bias. Before discussing technical details, we briefly describe two applications, demonstrating the relevance of the problem.

The first application is sentiment analysis in social media \cite{daas_big_2015, ravi_survey_2015}. For the sake of simplicity, assume that messages are either positive or negative and that the overall sentiment is defined as the difference between the number of positive and negative messages. The sentiment of one message is predicted using natural language processing. The estimator for the sentiment on social media obtained in this manner is statistically biased, unless precision equals recall, as we show below.

The second application is land cover mapping based on satellite imagery \cite{costa_land_2018, ma_review_2017}. Here, the aim is to estimate the area of different types of land cover (e.g., cropland, wetland) of a large, delimited surface (e.g, a country, a continent). In \cite{costa_land_2018}, the per-pixel land cover class (one of 15) is predicted by an SVM image classifier. Again, the estimated surface per land cover class is biased.

The two applications above are examples of the following general setting. Consider a set of $N$ data points. Each data point is equipped with a categorical variable $s$ (for \textit{stratum}) and a numerical variable $y$. We are interested in the aggregates obtained by summing $y$ after grouping by $s$. We denote the resulting aggregates by the $K$-vector $\bm{u}$, where $K>1$ is the number of categories that $s$ may attain. Now, if $s$ is not observed, but the result of a classification algorithm is used instead, the latter may contain classification errors. We therefore distinguish between the true class $s$ and the predicted class $\widehat{s}$. Similarly, we write $\widehat{\bm{u}}$ for the $K$-vector of \textit{classification-based aggregates} obtained by summing $y$ after grouping by $\widehat{s}$.

The problem studied in this paper is that the vector of classification-based aggregates $\widehat{\bm{u}}$ will be a \textit{statistically biased} estimator for the true aggregate vector $\bm{u}$. The bias can be nonzero even if $N$ is large and the accuracy of the classification algorithm is high. In fact, the example below shows that the bias does not depend on $N$ at all.

\begin{BRE}
    We consider a set of $N~=~100,\!000$ companies, in which we would like to identify webshops based on the text found on the company's website. Assume a trained classification algorithm with false negative rate $p = \text{FN}/(\text{TP}+\text{FN}) = 0.01$ and false positive rate $q = \text{FP}/(\text{TN}+\text{FP}) = 0.005$. Assume that the set contains $u_1 = 10,\!000$ webshops, which would in practice be unknown. The fraction $u_1/N = 0.1$ is referred to as the \textit{base rate}. The expected number of companies classified as webshops is
    \begin{equation}\label{eq:intro_example}
        0.99 \cdot 10,\!000 + 0.005 \cdot 90,\!000 = 10,\!350,
    \end{equation}
    showing that the estimator has a relative bias of +3.5\%.
\end{BRE}

Essentially, the base rate example shows that if $s$ is binary and  $y \equiv 1$,
\begin{equation}\label{eq:main_problem}
    \E[\widehat{\bm{u}}] = P^T\bm{u} \quad \text{with} \quad
    P = 
    \begin{pmatrix}
        1-p & p \\
        q & 1-q    
    \end{pmatrix}.
\end{equation}
In fact, it can be shown that equation~\eqref{eq:main_problem} holds for a multi-class variable $s$ and any numerical variable $y$, with $P$ being the $K\times K$ \textit{contingency matrix} \cite{van_delden_accuracy_2016}. We now make two crucial observations. First, the \textit{relative bias} $\E[(\widehat{\bm{u}} - u)/N]$ does \textit{not} depend on $N$. It implies that the bias does not vanish for large datasets. Second, the bias is only equal to 0 if $p=q=0$, or if (for constant $y$) the base rate is precisely equal to $q/(p+q)$. In the case of binary classification and constant $y$, the latter is equivalent to precision being equal to recall. In the case of multi-class classification and constant $y$, it is equivalent to the base rate vector (and hence $\bm{u}$) being an eigenvector (of the transposed contingency matrix $P^T$) corresponding to the eigenvalue $1$.

Now, if the inverse $Q = (P^T)^{-1}$ of $P^T$ is well-defined (in the binary case: if $p+q \neq 1$), then $Q\widehat{\bm{u}}$ is an unbiased estimator for $\bm{u}$. The problem is that the classification error rates are not known exactly and merely estimated. If the estimates of the classification error rates are based on a (very) small test set, the proposed unbiased estimator $Q\widehat{\bm{u}}$ might attain impermissible values. 
To illustrate this problem, we include a second example.

\begin{PE}
    Consider a set of $N=100$ companies in which the \textit{predicted} number of webshops is equal to $10$. To estimate the classification errors, a (very) small test set of size $n=10$ is used. Assume that it resulted in $\text{TP} = 4$, $\text{FP} = 1$, $\text{FN} = 2$ and $\text{TN} = 2$, hence $p = 0.2$ and $q = 0.4$. Correcting the bias as suggested above yields
    \begin{equation}
        \left(P^T\right)^{-1}\widehat{\bm{u}} =
        \begin{pmatrix}
            1.5 & -1 \\
            -0.5 & 2
        \end{pmatrix}
        \begin{pmatrix}
            10 \\ 90
        \end{pmatrix}
        =
        \begin{pmatrix}
            -75 \\ 175
        \end{pmatrix}.
    \end{equation}
    Thus, the unbiased estimate of the number of webshops in the dataset is $-75$.
\end{PE}

The issue in the peculiar example is caused by the fact that (1) $p$ and $q$ are not known but merely estimated and (2) the base rate is relatively low. Observe that the outcome does not dependent on the size $N$ of the \textit{full} dataset. The problem arises because the \textit{test} set is small (because it implies inaccurate estimates for $p$ and $q$), as we will show in Section~\ref{sec_methods_bayes_restr}. Having only a small \textit{test} set available, even if $N$ is large, is quite common. The reason is that labeled data is unavailable in many applications (e.g., the sentiment analysis and land cover mapping examples), while manually creating labeled data requires expert knowledge, making it expensive.

We propose a novel bias correction method that reduces the statistical bias of classification-based aggregates. In contrast to existing methods, it is suitable for applications where the test set is small. If the test set \textit{is} large enough, our bias correction method will give as accurate results as existing methods.

This paper is structured as follows. In Section~\ref{sec_relwork}, the formal problem statement is introduced and related work is discussed. In Section~\ref{sec_methods}, we formulate our bias correction method in the general setting of multi-class classification problems. In Section~\ref{sec_exper}, we illustrate the effectiveness of the proposed methods using experiments on real-world data. Finally, Section~\ref{sec_concl} concludes.


\section{Problem Statement and Related Work}\label{sec_relwork}

In this section, we introduce the notation and formal problem statement, and discuss related work.

\subsection{Problem Statement.}\label{sec_relwork_back}

We partly adopt the notation from \cite{van_delden_accuracy_2016} to formulate our problem statement. Consider a set $I$ of objects $i = 1, 2, \ldots, N$. Each object $i \in I$ belongs to a class $s_i$. The finite, ordered set of classes is denoted by $H$ and is of size $K \coloneqq |H| \geq 2$. In addition, each object is attributed with a continuous variable $y_i$ of interest. We introduce the \emph{class matrix} $A = (a_{ih})$ of dimension $N\times K$ given by $a_{ih} = \I(s_i = h)$, where $\I(\cdot)$ denotes the indicator function. The \textit{counts vector} $\bm{v} \coloneqq A^T \bm{1}$, where $\bm{1}$ is an $N$-vector of ones, counts the number of occurrences of the classes $h \in H$ in the population. That is, $v_h$ equals the number of $i \in I$ for which $s_i = h$. The \textit{base rate vector} $\bm{\beta}$ of length $K$ is given by $\beta_h = v_h/N$ for $h \in H$.

Assume that the classes $s_i$ are not known, but instead predicted to be $\widehat s_i$. The predicted class matrix based on the predictions $\widehat{s_i}$ is denoted by $\widehat A$. Similarly, $\widehat{\bm{v}}$ denotes the predicted counts vector. Assume that $y_i$ is known for all $i \in I$. The goal is to estimate the sum of $y_i$ over each of the classes $h \in H$. In other words, the main problem statement is how to find, using the predicted $\widehat s_i$ and the known $y_i$, an accurate estimator of the \textit{aggregate vector}
\begin{equation}\label{eq_background_target}
    \bm{u} = A^T y.
\end{equation}
In the sentiment analysis application from before, the set $I$ is the set of messages and $s_i$ is binary (a positive or negative message). Each $y_i$ equals 1 and hence $\bm{u}$ is a 2-vector with $u_1$ the number of positive messages and $u_2$ the number of negative messages. In the land cover mapping application obtained from \cite{costa_land_2018}, each $i \in I$ is a pixel and $s_i$ may attain $K=15$ different values. The value $y_i$ is the real land area corresponding to pixel $i$. Hence, the 15-vector $\bm{u}$ contains the total land area of each of the 15 land cover classes.

Now, a first estimator of $\bm{u}$ might be the $K$-vector
\begin{equation}\label{eq_background_predicted}
    \widehat{\bm{u}} = \widehat{A}^Ty.
\end{equation}
However, we know that $\widehat{\bm{u}}$ is a biased estimator of $\bm{u}$, recall equation~\eqref{eq:main_problem}. To estimate (and then correct) this statistical bias, we assume a \emph{classification error model}, following the methodology in \cite{van_delden_accuracy_2016}; the value of $\widehat s_i$, given the value of $s_i$, is assumed to be a stochastic variable following a \textit{categorical distribution}. The stochastic variable depends on $i$, but draws for different $i$ are assumed to be independent. The unknown event probabilities are denoted by $p_{ghi} = \Pr(\widehat s_i = h \mid s_i = g)$ and stored in \textit{contingency matrices} $P_i = (p_{ghi})_{gh}$ of dimension $K\times K$. The suggested estimator $\widehat{\bm{u}}$ is shown to have expectation $\E[\widehat{\bm{u}}] = \sum_{i\in I} P_i^T \bm{\widehat{a}_i} y_i$, where $\bm{\widehat{a}_i}$ is the $i$-th column of $\widehat{A}^T$. If $P_i^T$ is invertible, we denote its inverse by $Q_i$. An unbiased estimator of $\bm{u}$ is then given by $\sum_{i\in I} Q_i \bm{\widehat{a}_i} y_i$. We refer to this bias correction method as \textit{the baseline method}.

What remains is to estimate the classification error rates $p_{ghi}$ using a test set. In Section~\ref{sec_methods}, we propose a novel estimation method that properly deals with small test sets (recall The Peculiar Example).

\subsection{Related Work.}\label{sec_relwork_lit}
Below, we discuss related work on biased aggregates, bias in machine learning and Bayesian inference.

\vspace{\baselineskip} \noindent \textbf{Biased Aggregates.} To the best of our knowledge, \cite{van_delden_accuracy_2016} is the first work \textit{generally} describing the classification error model and studying classification-based aggregates. We mention front runners from two fields using similar bias correction methods.

The first field is \textit{epidemiology}, which studies the distribution of health and disease. There, it is well-known how a low base rate can lead to high bias even if sensitivity ($1-p$) and specificity ($1-q$) are high \cite{lash_applying_2009} (cf. The Base Rate Example). As mostly binary classifiers are considered (sick or not), bias correction is straightforward \cite[pp.~87-89]{lash_applying_2009}. In addition, a standard Bayesian method of bias correction, predominantly using a uniform or Jeffreys prior without parameter constraints, is applied in epidemiology \cite{gustafson_measurement_2004, goldstein_bayesian_2016}.
We will generalize these Bayesian methods to the entire family of conjugate prior distributions and to the setting of multi-class classification. Moreover, we will improve empirical performance by imposing well-chosen parameter constraints.

The second field is \textit{land cover mapping}, which analyzes land use based on large volumes of remote sensing data, for example using SVM \cite{low_analysis_2015}. As can be expected, multi-class classification is not uncommon in this subject area (see \cite{costa_land_2018} for a case study with $K=15$ classes). Since accurately estimating the total area of types of vegetation is highly relevant in monitoring ecological systems \cite{veran_modeling_2012}, there have been many efforts in the field to make better use of accuracy data \cite{olofsson_making_2013}. To the best of our knowledge, our bias correction method is a novel contribution to these efforts.

\vspace{\baselineskip} \noindent \textbf{Bias in Machine Learning.} In machine learning, the accuracy of classification-based aggregates is relatively understudied. The literature on machine learning is mainly concerned with minimizing loss for \textit{individual} future predictions and therefore focuses on a different kind of bias. Much work deals with model selection bias and overfitting \cite{cawley_over-fitting_2010} and sample selection bias \cite{zadrozny_learning_2004}. However, reducing these types of bias (by, e.g., using $k$-fold cross-validation) does not necessarily reduce the \textit{statistical} bias of classification-based aggregates. This is a pitfall especially if the base rate is low, i.e., when dealing with \textit{class-imbalanced data}. Of course, correctly dealing with class-imbalanced data is well-studied \cite{haixiang_learning_2017}. However, as long as the classifier is not error-free, it will still result in biased aggregate predictions.

An alternative is an \textit{aggregate loss function} that measures the bias on the aggregate level instead of on the individual level \cite{sodomka_predictive_2013}. 
Other alternatives include averaging multiple (biased) estimators into a single, more accurate estimator \cite{taniguchi_averaging_1997}. Such alternatives will reduce the bias of classification-based aggregates, but our proposed method will completely remove it for sufficiently large test sets.

\vspace{\baselineskip} \noindent \textbf{Bayesian Inference.} An extensive review on Bayesian inference for categorical data analysis can be found in \cite{agresti_bayesian_2005}. It specifically comments on the use of prior distributions and ``the lack of consensus about what noninformative means'' \cite{agresti_bayesian_2005}. We will avert this discussion by analytically deriving the posterior distribution for the entire family of conjugate priors. We empirically evaluate two common prior choices; the uniform (flat) prior and the Jeffreys prior. In real-world applications, our bias correction method can be implemented for non-conjugate prior as well, by utilizing Markov chain Monte Carlo (MCMC) methods.


\section{Methods}\label{sec_methods}

In this section, we introduce our bias correction method in the general setting of multi-class classification problems. The section contains three parts. First, the likelihood and posterior (for conjugate priors) as well as the Jeffreys prior are shown. Second, we formulate novel constraints on the classification error rates, being our main scientific contribution. Third, we show how to obtain the proposed Bayesian estimator of the aggregate vector using these parameter constraints.

\subsection{Bayesian Parameter Estimation.}\label{sec_methods_bayes} We begin by translating existing theorems from Bayesian statistics to our setting of multi-class classification. It mostly contains elementary probability manipulations. We make one simplifying assumption compared to \cite{van_delden_accuracy_2016}: the probabilities $p_{ghi}$ do not depend on $i$. We write $p_{gh}$ instead and only a single contingency matrix $P$ remains. We refer to this assumption as that of the \textit{homogeneity of the contingency matrices}. The reason for this simplifying assumption is that the notation, derivations and formulas are more pleasant to read. In practice, it might be more reasonable to assume that $p_{ghi}$ depends on $i$, but only through $y_i$ and possibly other features. Our proposed methodology can be applied separately to each group of objects having similar features, which can then be aggregated to obtain a single, final estimate.

Two parts now follow: (1) formulating the likelihood function and posterior distribution (for conjugate priors), and (2) deriving the Jeffreys prior.

\vspace{\baselineskip} \noindent \textbf{Likelihood and Posterior.} The parameters of the classification error model are the $K^2$ classification error rates (or \textit{event probabilities}) $\{p_{gh}: g,h \in H\}$ and the $K$ base rate parameters $\{\beta_g : g \in H\}$. To estimate these parameters, consider a test set $J = \{i_1, \ldots, i_n\} \subset I$ of $n$ randomly selected $i_j \in I$ for which we observe $s_{i_j}$. The corresponding dataset $\mathcal{D}$ of $n$ independent observations $\bm{x}_1, \ldots, \bm{x}_n$ is given by $\bm{x}_j = (s_{i_j}, \widehat s_{i_j})$, for $i_j \in J$. We simply write $\bm{x}_j = (s_j, \widehat s_j)$ for $j=1, \ldots, n$. The following theorem shows the likelihood function and the posterior distribution for a suitable family of prior distributions, namely Dirichlet distributions. Recall that a Dirichlet distribution of order $k \geq 2$ has the standard $(k-1)$-simplex $\Delta_{k-1} \subset \mathbb{R}^k$ as support, where
\begin{equation}
    \Delta_{k-1} = \left\{\bm{x} \in \mathbb{R}^k : x_i \geq 0, {\textstyle \sum}_i x_i = 1\right\}.
\end{equation}
The density of a Dirichlet distribution of order $k \geq 2$ with concentration parameters $\bm{\alpha} = (\alpha_1, \ldots, \alpha_k)$ equals
\begin{equation}\label{eq_def_Dir}
    f( \bm{x} \mid \bm{\alpha}) = \frac{\Gamma(\alpha_1 + \cdots + \alpha_k)}{\Gamma(\alpha_1) \cdots \Gamma(\alpha_k)} \prod_{m=1}^k x_m^{\alpha_m-1},
\end{equation}
where $\Gamma(\cdot)$ is the gamma function. A Dirichlet distribution will be referred to as $\Dir(k, \bm{\alpha})$.

\begin{theorem}\label{thm_like_post}
    The likelihood function of observing the dataset $\mathcal{D} = \{\bm{x}_1, \ldots, \bm{x}_n\}$, given the model parameters $\bm{p}$ and $\bm{\beta}$, is given by
    \begin{equation}\label{eq_likelihood}
        p(\mathcal{D} \mid \bm{p}, \bm{\beta}) = \prod_{g, h \in H} \left(p_{gh}\beta_g\right)^{n_{gh}},
    \end{equation}
    where $n_{gh} = |\{j \in J: x_j = (g, h)\}|$. The family consisting of products of $K+1$ independent Dirichlet distributions of order $K$ forms the collection of conjugate priors for the above likelihood function. Next, choose the prior $\prod_{k=1}^{K+1} D_k$ on $(\bm{p}, \bm{\beta})$, where $D_g \sim \Dir(K, \bm{\alpha}_g)$ for $g \in H$ and $D_{K+1}\sim\Dir(K, \bm{\gamma})$. Write $\bm{\alpha}_g = (\alpha_{gh})_{h\in H}$ for $g\in H$ and $\bm{\gamma} = (\gamma_g)_g$. The posterior density is then given by
    \begin{equation}\label{eq_post}
        p(\bm{p}, \bm{\beta} \mid \mathcal{D}, \bm{\alpha}, \bm{\gamma})
        \hspace{-3pt} \propto \hspace{-3pt}
        \left(\prod_{g, h \in H} p_{gh}^{\alpha_{gh}+n_{gh}-1}\right) \hspace{-5pt}
        \left(\prod_{g \in H} \beta_g^{\gamma_g + n_g -1}\right) \hspace{-3pt},
    \end{equation}
    where $n_g = \sum_{h\in H} n_{gh}$. Thus, the posterior distribution is the product of $K+1$ independent Dirichlet distributions, the first $K$ being $\Dir(K, (\alpha_{gh} + n_{gh})_{h \in H})$, for $g\in H$, and the last one being $\Dir(K, (\gamma_g + n_g)_{g \in H})$.
\end{theorem}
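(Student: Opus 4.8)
The plan is to prove Theorem~\ref{thm_like_post} in three stages: first establish the likelihood formula~\eqref{eq_likelihood}, then verify that products of $K+1$ independent Dirichlet distributions form a conjugate family, and finally read off the posterior~\eqref{eq_post} and its decomposition into independent Dirichlet factors.

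\medskip \noindent \textbf{Step 1: The likelihood.} Since the test set $J$ consists of $n$ independent draws, the joint density of $\mathcal{D} = \{\bm{x}_1, \ldots, \bm{x}_n\}$ factors as $\prod_{j=1}^n p(\bm{x}_j \mid \bm{p}, \bm{\beta})$. For a single observation $\bm{x}_j = (s_j, \widehat s_j)$, I would write $p(\bm{x}_j = (g,h) \mid \bm{p}, \bm{\beta}) = \Pr(s_j = g)\,\Pr(\widehat s_j = h \mid s_j = g) = \beta_g\, p_{gh}$, using the definition of the base rate parameters and of the contingency matrix $P$ together with the homogeneity assumption (probabilities do not depend on $i$). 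Collecting the $n$ factors and grouping identical terms gives $\prod_{g,h\in H} (p_{gh}\beta_g)^{n_{gh}}$, where $n_{gh}$ counts how many of the $n$ observations landed in cell $(g,h)$; this is exactly~\eqref{eq_likelihood}. This step is routine bookkeeping.

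\medskip \noindent \textbf{Step 2: Conjugacy.} The key algebraic observation is that the likelihood~\eqref{eq_likelihood} separates as $\left(\prod_{g,h} p_{gh}^{n_{gh}}\right)\left(\prod_g \beta_g^{n_g}\right)$ with $n_g = \sum_h n_{gh}$, i.e.\ it is a product of $K$ monomials in the rows $(p_{gh})_{h\in H}$ of $P$ (each row lying in $\Delta_{K-1}$ since $\sum_h p_{gh}=1$) and one monomial in $\bm{\beta}\in\Delta_{K-1}$. Each such monomial has precisely the functional form of a Dirichlet density kernel~\eqref{eq_def_Dir}. Hence, taking a prior that is a product $\prod_{g\in H} D_g \times D_{K+1}$ of independent Dirichlets — $D_g \sim \Dir(K,\bm\alpha_g)$ governing row $g$ of $P$, and $D_{K+1}\sim\Dir(K,\bm\gamma)$ governing $\bm\beta$ — and multiplying by the likelihood, the supports match (each $\Delta_{K-1}$) and the exponents simply add within each factor. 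This shows the family is closed under the Bayesian update, which is the definition of conjugacy. I would remark that this family is in fact the \emph{full} conjugate family in the sense standard for exponential-family likelihoods, though for the theorem it suffices to exhibit closure.

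\medskip \noindent \textbf{Step 3: The posterior.} Applying Bayes' rule, $p(\bm p,\bm\beta\mid\mathcal D,\bm\alpha,\bm\gamma) \propto p(\mathcal D\mid\bm p,\bm\beta)\,p(\bm p,\bm\beta\mid\bm\alpha,\bm\gamma)$, and inserting~\eqref{eq_likelihood} and the product-Dirichlet prior density, the exponents combine to give $p_{gh}^{\alpha_{gh}+n_{gh}-1}$ in each $(g,h)$ factor and $\beta_g^{\gamma_g+n_g-1}$ in each $\beta$ factor, which is~\eqref{eq_post}. Because this expression again factors over the $K+1$ simplices with no cross terms, the posterior is a product of independent Dirichlets with the stated updated concentration parameters; the normalizing constants are recovered from the Dirichlet integral~\eqref{eq_def_Dir}.

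\medskip \noindent I do not anticipate a genuine obstacle here: the result is essentially the multi-dimensional analogue of the classical Beta--Binomial / Dirichlet--Multinomial conjugacy, and the proof is a matter of careful indexing. The one point requiring mild care is the reparametrization in Step~2 — making explicit that each \emph{row} of $P$ (not $P$ as a whole) is the natural simplex-valued parameter, so that the likelihood's $p$-part really is a product of $K$ separate Dirichlet kernels rather than a single one on a $K^2$-dimensional object. Getting that decomposition right is what makes the ``$K+1$ independent Dirichlet distributions'' statement fall out cleanly.
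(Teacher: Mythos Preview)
Your proposal is correct and follows essentially the same approach as the paper: both derive the single-observation probability via $\Pr(s=g)\Pr(\widehat s=h\mid s=g)=\beta_g p_{gh}$, factor the likelihood as $\bigl(\prod_{g,h}p_{gh}^{n_{gh}}\bigr)\bigl(\prod_g\beta_g^{n_g}\bigr)$ into $K+1$ simplex-valued pieces, and then invoke Dirichlet--categorical conjugacy on each factor separately. Your emphasis on the row-wise decomposition of $P$ is exactly the point the paper relies on, and your treatment is if anything slightly more self-contained since the paper defers the conjugacy step to a textbook citation.
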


\begin{proof}
    The probability that $s_i = g$ for a randomly selected $i \in I$ equals $\beta_g$. The probability of observing $\bm{x}_i = (g, h)$, given the classification error model, is equal to
    \begin{equation}
        \Pr(s_i = g, \widehat s_i = h) = \Pr(\widehat s_i = h \mid s_i = g)\Pr(s_i = g) = p_{gh}\beta_g.
    \end{equation}
    The pairs $\bm{x}_i$ again follow a categorical distribution, now mapping into the product space $\{1, \ldots, K\} \times \{1, \ldots, K\}$ having event probabilities $p_{gh}\beta_g$. Equation~\eqref{eq_likelihood} now follows, see \cite[pp.~74-75]{bishop_pattern_2006}.
    
    Next, we observe that the likelihood function \eqref{eq_likelihood} is equivalent to
    \begin{equation}\label{eq_likelihood_alt}
        p(\mathcal{D} \mid \bm{p}, \bm{\beta}) =
        \left(\prod_{g, h \in H} p_{gh}^{n_{gh}}\right)
        \left(\prod_{g \in H} \beta_g^{n_g}\right),
    \end{equation}
    where $n_g = \sum_h n_{gh} = |\{j \in J: s_j = g\}|$. The likelihood function can be viewed as a product of $K+1$ independent categorical distributions, as $\sum_h p_{gh}~=~1$ for every $g \in H$. It is well-known that the family of Dirichlet distributions forms the collection of conjugate priors to the categorical (and multinomial) distribution. For a derivation, see \cite[pp.~76-78]{bishop_pattern_2006}. This proves the claim regarding the family of conjugate priors for the likelihood function \eqref{eq_likelihood}.
    
    Finally, to derive the posterior distribution, we apply the result from \cite[pp.~76-78]{bishop_pattern_2006} to each of the parameter vectors $\bm{p}_g$ (with $g \in H$) and $\bm{\beta}$ separately. The posterior distribution for a prior distribution $\prod_{k=1}^{K+1} D_k$ on $(\bm{p}, \bm{\beta})$, where $D_g \sim \Dir(K, \bm{\alpha}_g)$ for $g \in H$ and $D_{K+1}\sim\Dir(K, \bm{\gamma})$, is then seen to be equal to 
    \begin{equation*}
        p(\bm{p}, \bm{\beta} \mid \mathcal{D}, \bm{\alpha}, \bm{\gamma})
        \hspace{-3pt} \propto \hspace{-3pt}
        \left(\prod_{g, h \in H} p_{gh}^{\alpha_{gh}+n_{gh}-1}\right) \hspace{-5pt}
        \left(\prod_{g \in H} \beta_g^{\gamma_g + n_g -1}\right) \hspace{-3pt}.
    \end{equation*}
    This concludes the proof.
\end{proof}

\vspace{\baselineskip} \noindent \textbf{The Jeffreys Prior.}
If the model is correctly specified, the effect of prior choices will diminish as $n$ attains larger values. For small test set sizes $n$, the choice of a prior distribution does affect the posterior distribution. With the breakthrough of MCMC methods, there is no need to impose any restrictions (other than those resulting from numerical limitations) on the family of prior distributions. For now, we have considered conjugate prior distributions in order to analytically formulate the posterior distribution. Specifically, we will compare two common prior choices: (1) the uniform prior and (2) the Jeffreys prior (cf. \cite{agresti_bayesian_2005}). The uniform prior corresponds to setting the components of the hyperparameters $\bm{\gamma}$ and $\bm{\alpha}_g$, $g \in H$, equal to 1 in Theorem~\ref{thm_like_post}. The Jeffreys prior, defined as proportional to the square root of the determinant of the Fisher Information Matrix (FIM), corresponds (in the case of a single categorical stochastic variable) to setting all components of $\bm{\alpha}$ equal to $1/2$ (\cite[303]{agresti_bayesian_2005}). However, due to the repeated occurrence of $\beta_g$ in the likelihood function \eqref{eq_likelihood}, we find a slightly different outcome which we did not find in recent text books.

\begin{proposition}\label{prop_jeffreys}
    The Jeffreys prior for the likelihood function \eqref{eq_likelihood} is given by a product of $K+1$ independent Dirichlet distributions of order $K$ with hyperparameters $\alpha_{gh} = 1/2$ for all $g, h \in H$ and $\gamma_g = K/2$ for all $g \in H$.
\end{proposition}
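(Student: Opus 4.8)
The plan is to compute the Fisher Information Matrix (FIM) of the likelihood~\eqref{eq_likelihood} in an explicit coordinate system that resolves the simplex constraints, argue that it is block diagonal, and then evaluate the determinant block by block; the Jeffreys prior is, by definition, proportional to $\sqrt{\det\mathcal{I}}$.

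First I would fix free coordinates: for each $g\in H$ use $(p_{g1},\ldots,p_{g,K-1})$ with $p_{gK}=1-\sum_{h<K}p_{gh}$, and likewise use $(\beta_1,\ldots,\beta_{K-1})$ with $\beta_K=1-\sum_{g<K}\beta_g$. Writing the log-likelihood from~\eqref{eq_likelihood_alt} as $\ell=\sum_{g,h}n_{gh}\log p_{gh}+\sum_g n_g\log\beta_g$, I would observe that $\ell$ is a sum of $K+1$ additively separate pieces — one in each block $\bm{p}_g$, and one in $\bm{\beta}$ — so every mixed second derivative across distinct pieces vanishes and the FIM is block diagonal, with $K$ blocks of size $(K-1)\times(K-1)$ for the $\bm{p}_g$ and one $(K-1)\times(K-1)$ block for $\bm{\beta}$. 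Hence $\sqrt{\det\mathcal{I}}$ factorizes into a product over these blocks, which will produce the claimed product of $K+1$ independent Dirichlet densities.

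Next I would evaluate one $\bm{p}_g$ block. Using that an observation is the pair $(g,h)$ independently with probability $p_{gh}\beta_g$, so $\E[n_{gh}]=n\,p_{gh}\beta_g$, and differentiating $\sum_h n_{gh}\log p_{gh}$ twice in the chosen coordinates, I expect
\begin{equation*}
    \mathcal{I}_{\bm{p}_g}=n\beta_g\left(\diag\!\big(p_{g1}^{-1},\ldots,p_{g,K-1}^{-1}\big)+p_{gK}^{-1}\bm{1}\bm{1}^T\right),
\end{equation*}
whose determinant, by the matrix determinant lemma, equals $(n\beta_g)^{K-1}\prod_{h\in H}p_{gh}^{-1}$, giving $\sqrt{\det\mathcal{I}_{\bm{p}_g}}\propto\beta_g^{(K-1)/2}\prod_{h\in H}p_{gh}^{-1/2}$. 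The $\bm{\beta}$ block is the standard categorical case with $\E[n_g]=n\beta_g$, yielding $\sqrt{\det\mathcal{I}_{\bm{\beta}}}\propto\prod_{g\in H}\beta_g^{-1/2}$. Multiplying the pieces gives $\sqrt{\det\mathcal{I}}\propto\big(\prod_{g,h}p_{gh}^{-1/2}\big)\big(\prod_g\beta_g^{(K-1)/2}\big)\big(\prod_g\beta_g^{-1/2}\big)=\prod_{g,h}p_{gh}^{1/2-1}\cdot\prod_g\beta_g^{K/2-1}$, and matching with the Dirichlet density~\eqref{eq_def_Dir} reads off $\alpha_{gh}=1/2$ and $\gamma_g=K/2$.

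The main obstacle I anticipate is bookkeeping rather than depth: one must choose the reparametrization carefully so the constraints are handled correctly, verify that the off-diagonal blocks of the FIM really vanish, and — most importantly — track the extra factor $\beta_g^{(K-1)/2}$ that each $\bm{p}_g$ block contributes because $\E[n_{gh}]$ is proportional to $\beta_g$. It is precisely this factor, combined with the $\bm{\beta}$ block, that shifts $\gamma_g$ from the naive value $1/2$ to $K/2$ and explains the discrepancy with the textbook rule noted before the proposition. A secondary point is to confirm that the overall power of $n$ and any combinatorial constants are parameter-free and so drop out of the (improper) prior.
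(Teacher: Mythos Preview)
Your proposal is correct and follows essentially the same route as the paper: block-diagonalize the FIM across the $K$ row vectors $\bm{p}_g$ and $\bm{\beta}$, compute each $(K-1)\times(K-1)$ block as $\diag^{-1}+\text{rank one}$, apply the matrix determinant lemma to obtain $\det(\mathcal{I}_{\bm{p}_g})\propto\beta_g^{K-1}\prod_h p_{gh}^{-1}$ and $\det(\mathcal{I}_{\bm{\beta}})\propto\prod_g\beta_g^{-1}$, and read off the Dirichlet exponents. The only cosmetic differences are that the paper eliminates the diagonal coordinate $p_{gg}$ rather than $p_{gK}$ and suppresses the parameter-free factor $n$ from the start, while you carry it and drop it at the end.
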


\begin{proof}
    The proof consists of computing the determinant of the Fisher information matrix (FIM). To compute the FIM, a linearly independent set of parameters specifying the model is required. Note that the parameter vectors $\{\bm{p}_g: g \in H\}$ and $\bm{\beta}$ are elements of the $K$-simplex $\Delta_K$, as $p_{gg} = 1 - \sum_{h \neq g} p_{gh}$ for each $g \in H$ and $\beta_K = 1- \sum_{g \neq K} \beta_g$, where we identify $H$ with the set $\{1, 2, \ldots, K\}$. It follows that the classification error model has $(K+1)(K-1) = K^2-1$ free parameters, which we will denote by $\widetilde{\bm{p}} = (\widetilde{\bm{p}}_g)_{g\in H}$, with $\widetilde{\bm{p}}_g = (p_{gh})_{h\neq g}$, and $\widetilde{\bm{\beta}} = (\beta_g)_{g \neq K}$. It is straightforward to show that the FIM is a block-diagonal matrix of the form
    \begin{equation}\label{eq_methods_FIM}
        \bm{I}(\widetilde{\bm{p}}, \widetilde{\bm{\beta}}) =
        \begin{pmatrix}
            A_1 & \bm{0} & \cdots & \bm{0} \\
            \bm{0} & \ddots & & \vdots\\
            \vdots & & A_K & \bm{0}\\
            \bm{0} & \cdots & \bm{0} & A_{K+1}
        \end{pmatrix},
    \end{equation}
    where $A_g = \beta_g \left[p_{gg}^{-1} \cdot\bm{1}\cdot\bm{1}^T + \diag(\widetilde{\bm{p}}_g)^{-1}\right]$ for $g \in H$ and $A_{K+1} = \beta_K^{-1}\cdot\bm{1}\cdot\bm{1}^T + \diag(\widetilde{\bm{\beta}})^{-1}$, see \cite[p.~391]{hogg_introduction_2018}. We will now use the fact that the determinant of an $m \times m$ matrix $C = \alpha\bm{a}\bm{b}^T + \diag(\bm{d})$, with $\alpha \in \mathbb{R}$ and $\bm{a},\bm{b},\bm{d} \in \mathbb{R}^m$ is given by
    \begin{equation}
        \det(C) = \left(1 + \alpha \sum_{j=1}^m \frac{a_jb_j}{d_j}\right)\prod_{j=1}^m d_j.
    \end{equation}
    For a proof, consult \cite[pp.~293-294]{graybill_matrices_1983}. Applied to \eqref{eq_methods_FIM}, we find $\det(A_g) = \beta_g^{K-1}\prod_{h\in H} p_{gh}^{-1}$, for $g \in H$ and $\det(A_{K+1}) = \prod_{g \in H} \beta_g^{-1}$. It follows that
    \begin{equation}\label{eq_methods_FIM_det}
        \det(\bm{I}(\bm{p}, \bm{\beta})) = \left(\prod_{g, h \in H} p_{gh}^{-1}\right)\left(\prod_{g\in H} \beta_g^{K-2}\right).
    \end{equation}
    Taking the square root concludes the proof.
\end{proof}

The Jeffreys prior density for the binary classification problem ($K=2$) reduces to
\begin{equation}
    p(\bm{p}, \bm{\beta} \mid \bm{\alpha}, \bm{\gamma}) = \frac{1}{\pi^2\sqrt{p(1-p)q(1-q)}},
\end{equation}
where $p = p_{10}$ and $q = p_{01}$.
Panel (a) of Figure~\ref{fig_methods_post_convergence} shows the marginal Jeffreys prior density for the parameter $p = p_{10}$ (1 minus recall) for $K=2$. Panels (b)---(d) show the marginal posterior density for $n=50$, $n=500$ and $n=2,\!000$, using simulated data with base rate $\beta_1 = 0.1$ and true classification error rates $p = 0.3$ and $q = 0.1$. The posterior density indeed converges to the true parameter value $p = 0.3$ (dashed line).

\begin{figure}[tb]
    \centering
    \subfloat[$n=0$]{\includegraphics[width=0.25\textwidth]{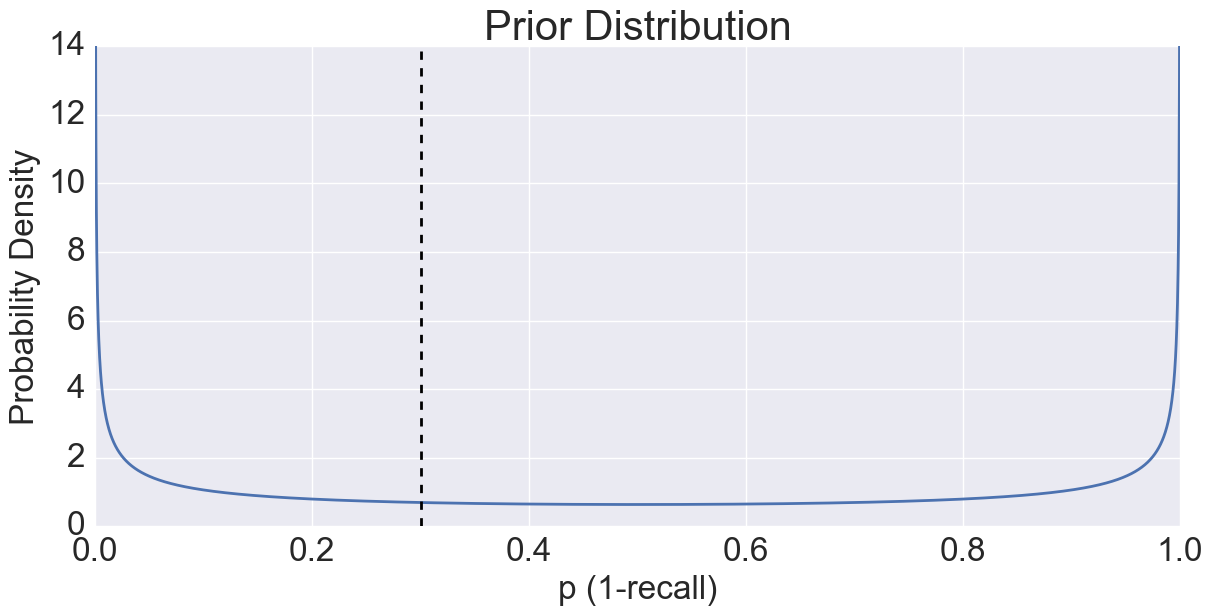}}
    \subfloat[$n=50$]{\includegraphics[width=0.25\textwidth]{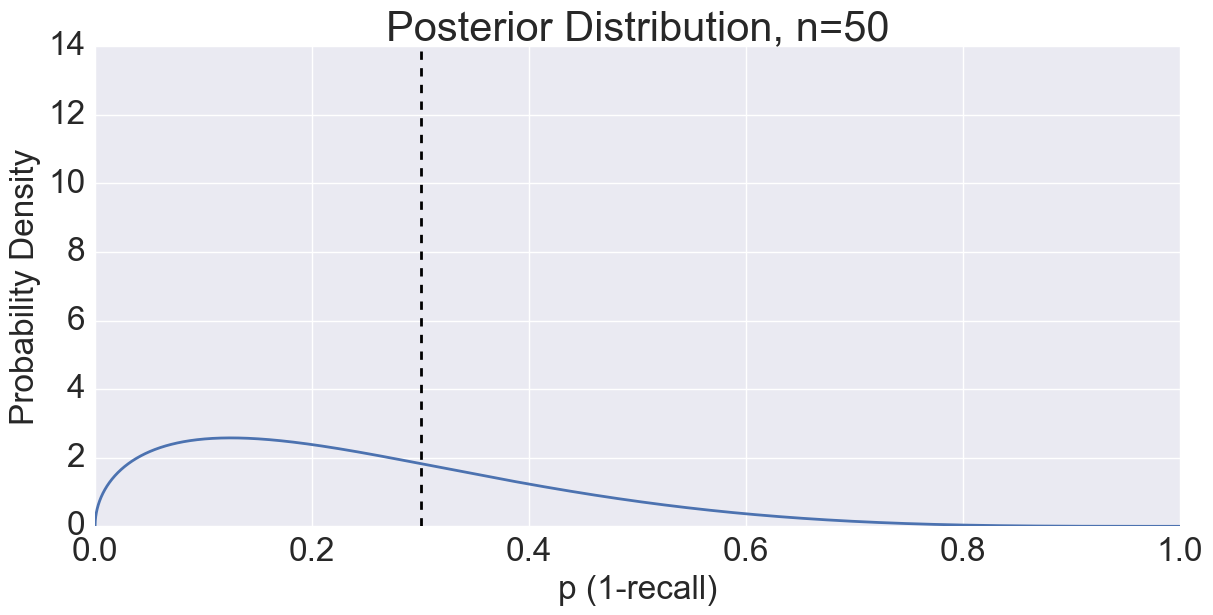}}\\
    \subfloat[$n=500$]{\includegraphics[width=0.25\textwidth]{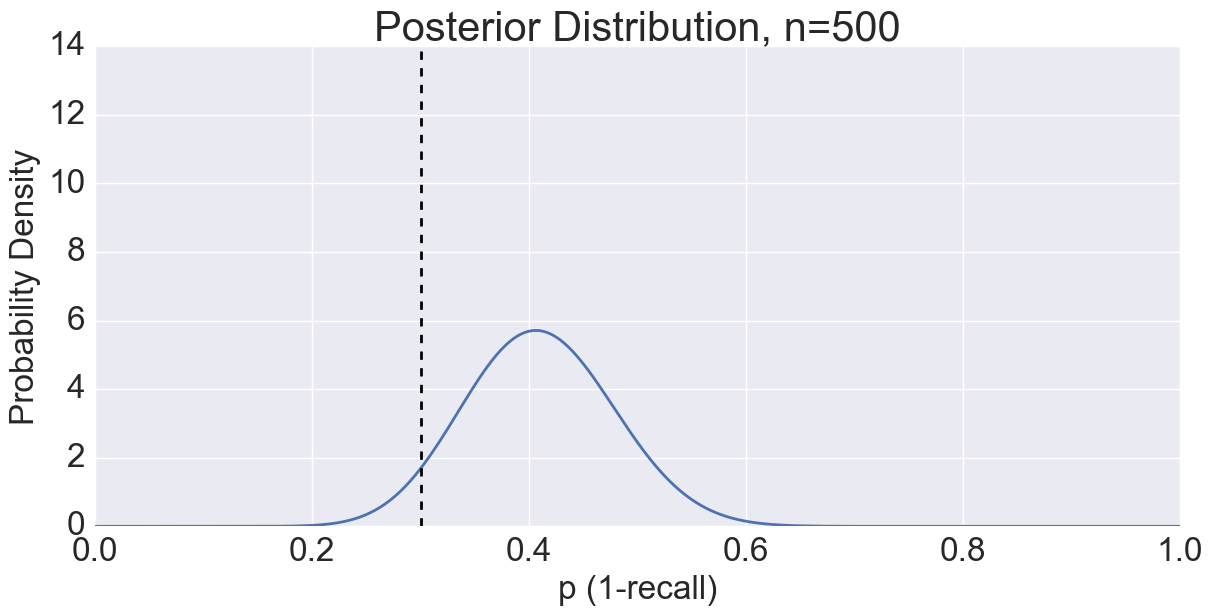}}
    \subfloat[$n=2,\!000$]{\includegraphics[width=0.25\textwidth]{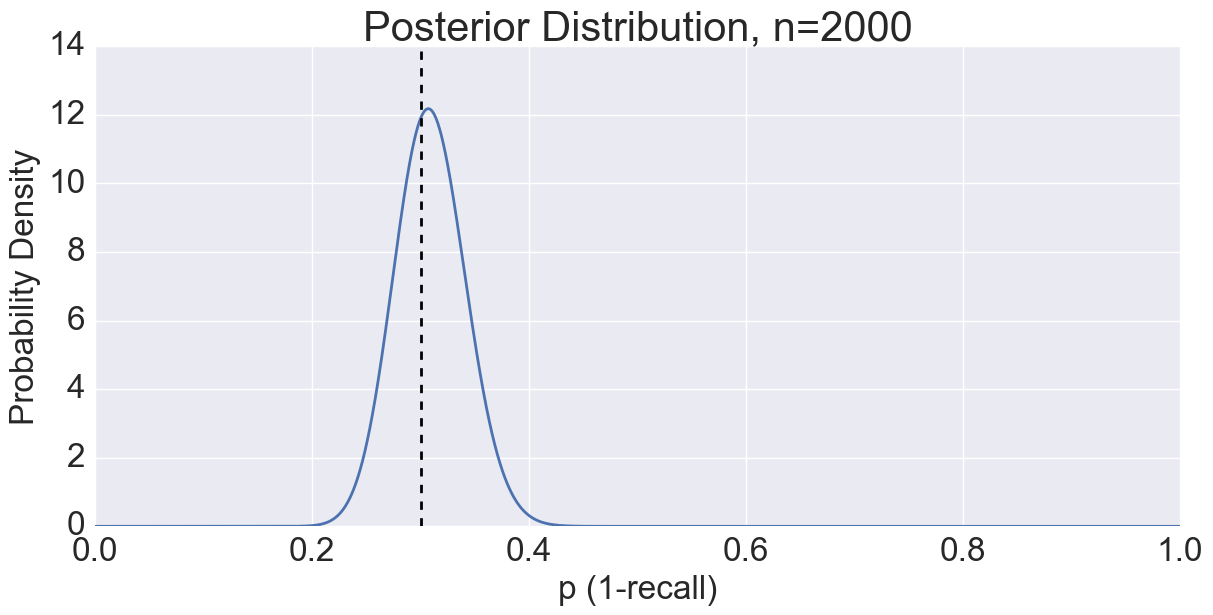}}
    \caption{\textit{The Jeffreys prior (no parameter constraints) and resulting posterior densities for model $p$, converging to the true parameter $p_0 = 0.3$ as $n$ increases.}
    }
    \label{fig_methods_post_convergence}
\end{figure}

\subsection{Imposing Parameter Constraints.}\label{sec_methods_bayes_restr}
Motivated by The Peculiar Example, we deduce useful constraints for the model parameter $\bm{p}$. The example showed that in (very) small test sets, the unbiased estimator $Q\widehat{\bm{v}}$ of the counts vector $\bm{v}$ might give impermissible estimates (negative counts). In particular, we estimated the true webshop count to be $-75$.

In our setting, there are two possible explanations for finding a negative estimate of a positive quantity: (I) the predicted base rate vector, being only a single realization of a stochastic variable, lies far away from its mean in this particular case, or (II) we have estimated the contingency matrix not sufficiently accurately and therefore the bias correction is inaccurate. The following theorem demonstrates why, in many practical cases, (II) plays a larger role than (I).


\begin{theorem}\label{thm_base_rate_variance}
    The variance-covariance matrix of the base rate vector $\widehat{\bm{\beta}}$, conditional on the true $s_i$, equals
    \begin{equation}
        \Var(\widehat{\bm{\beta}} \mid \{s_i, i\in I\}) = \frac{1}{N}
        \left(
            \diag(P^T\bm{\beta}) -  P^T\diag(\bm{\beta})P 
        \right).
    \end{equation}
\end{theorem}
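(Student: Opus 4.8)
The plan is to write $\widehat{\bm{\beta}}$ as a normalized sum of per-object indicator vectors, exploit the conditional independence of the predictions given the true labels, and then reduce everything to the covariance matrix of a single categorical draw. Concretely, for each $i \in I$ introduce the random vector $\bm{Z}_i \in \R^K$ with components $Z_{ih} = \I(\widehat{s}_i = h)$, so that $\widehat{A}^T\bm{1} = \sum_{i\in I}\bm{Z}_i$ and hence $\widehat{\bm{\beta}} = \tfrac{1}{N}\sum_{i\in I}\bm{Z}_i$. Conditioning on $\{s_i : i\in I\}$, the classification error model together with the homogeneity assumption $P_i = P$ says that each $\bm{Z}_i$ is the indicator vector of a categorical draw whose event-probability vector is the $s_i$-th row of $P$, which I will write as $\bm{p}_{s_i} = P^T\bm{e}_{s_i}$ (with $\bm{e}_g$ the $g$-th standard basis vector), and that the $\bm{Z}_i$ are mutually independent. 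The cross terms therefore vanish and
\begin{equation*}
    \Var(\widehat{\bm{\beta}}\mid\{s_i\}) = \frac{1}{N^2}\sum_{i\in I}\Var(\bm{Z}_i\mid s_i).
\end{equation*}

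Next I would record the covariance matrix of a single categorical indicator vector: if $\bm{Z}$ indicates a draw with probability vector $\bm{p}$, then $\E[Z_h Z_{h'}] = 0$ for $h\neq h'$ and $\E[Z_h^2] = p_h$, so $\Var(\bm{Z}) = \diag(\bm{p}) - \bm{p}\bm{p}^T$. Applying this with $\bm{p} = \bm{p}_{s_i}$ and grouping the objects by their true class (using that exactly $v_g$ of them satisfy $s_i = g$, and that $\beta_g = v_g/N$) gives
\begin{equation*}
    \Var(\widehat{\bm{\beta}}\mid\{s_i\}) = \frac{1}{N}\sum_{g\in H} \beta_g\left(\diag(\bm{p}_g) - \bm{p}_g\bm{p}_g^T\right).
\end{equation*}

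Finally I would translate this class-indexed sum into the stated matrix expression. For the diagonal part, $\sum_g \beta_g \diag(\bm{p}_g)$ has $h$-th diagonal entry $\sum_g \beta_g p_{gh} = (P^T\bm{\beta})_h$, hence it equals $\diag(P^T\bm{\beta})$. For the rank-one part, substituting $\bm{p}_g = P^T\bm{e}_g$ gives $\sum_g \beta_g \bm{p}_g\bm{p}_g^T = P^T\bigl(\sum_g \beta_g\, \bm{e}_g\bm{e}_g^T\bigr)P = P^T\diag(\bm{\beta})P$. Combining the two pieces yields exactly $\tfrac{1}{N}\bigl(\diag(P^T\bm{\beta}) - P^T\diag(\bm{\beta})P\bigr)$, as claimed.

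I do not expect a genuine obstacle here: the substance is the conditional-independence reduction and the one-line categorical covariance formula. The only thing requiring care is the last translation step — keeping straight which index of $P$ is being summed over, and being comfortable that $\sum_{g} \beta_g\, \bm{e}_g\bm{e}_g^T = \diag(\bm{\beta})$ so that the rank-one terms assemble into the sandwiched product $P^T\diag(\bm{\beta})P$.
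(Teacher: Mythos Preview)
Your proposal is correct and follows essentially the same route as the paper: the paper also writes $\widehat{\bm{\beta}}$ as $\tfrac{1}{N}\sum_i \bm{\widehat a}_i$ (your $\bm{Z}_i$), uses conditional independence to reduce to per-object covariances, and invokes the categorical covariance identity $\Var(\bm{\widehat a}_i)=\diag(P^T\bm{a}_i)-P^T\bm{a}_i\bm{a}_i^T P$. The only cosmetic difference is bookkeeping order: the paper applies $\bm{a}_i\bm{a}_i^T=\diag(\bm{a}_i)$ at the per-object level and then sums directly over $i$, whereas you first group by class and then use $\sum_g \beta_g\,\bm{e}_g\bm{e}_g^T=\diag(\bm{\beta})$ at the end --- same identity, same result.
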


\begin{proof}
    Recall the $N\times K$ class matrix $A = (a_{ih})$ given by $a_{ih} = I(s_i = h)$ for objects $i\in I$ and classes $h \in H$. The vector $\bm{a_i}$ denoted the $i$-th column of $A^T$. Note that the equality $\bm{a_i}\bm{a_i}^T = \diag(\bm{a_i})$ holds, as each $\bm{a_i}$ is a standard basis vector of $\R^K$. A similar equality holds for $\bm{\widehat{a}_i}$. The variance-covariance matrix of $\bm{\widehat{a}_i}$ (for readability, we leave out the conditionality on the right-hand side) is equal to
    \begin{align*}
        \Var(\bm{\widehat{a}_i} \mid \{s_i, i\in I\})
        &= \E(\bm{\widehat{a}_i}\bm{\widehat{a}_i}^T) - \E(\bm{\widehat{a}_i} )\E(\bm{\widehat{a}_i})^T\\
        &= \E(\diag(\bm{\widehat{a}_i})) - P^T \bm{a_i}\bm{a_i}^T P\\
        &= \diag(P^T\bm{a_i}) - P^T \diag(\bm{a_i}) P.
    \end{align*}
    In the last equality, we used the fact that the operation $\diag(\cdot)$ commutes with taking the expectation. Recall that $\bm{\widehat{a}_i}$ and $\bm{\widehat{a}_j}$ ($i\neq j$) are independent, conditional on $\{s_i, i\in I\}$. Thus, summing both sides of the above equation over $i\in I$ and dividing the results by $N^2$ concludes the proof.
\end{proof}

Theorem~\ref{thm_base_rate_variance} shows that the variance of $\widehat{\bm{\beta}}$ is proportional to the inverse of the \textit{population size} $N$, while the variance of the parameter $\bm{p}$ is proportional to the inverse \textit{test set size} $n$. In practice, we often find $N \gg n$, hence (II) plays a larger role than (I).

Therefore, we wish to impose constraints on the model parameter $\bm{p}$ such that $Q\widehat{\bm{v}} \geq \bm{0}$ with probability~1, conditional on $\widehat{\bm{v}}$. Given $\widehat{\bm{v}}$, we write
\begin{equation}\label{eq_restr_general}
    \Theta_K = \Theta_K(\widehat{\bm{v}}) \coloneqq \left\{\bm{p} \in (\Delta_{K-1})^K : Q(\bm{p}) \widehat{\bm{v}} \geq \bm{0}\right\},
\end{equation}
where $\Delta_{K-1}$ is the standard $(K-1)$-simplex in $\mathbb{R}^K$. In other words, to determine whether a point $\bm{p}$ is in $\Theta_K$, one has to check whether or not $\widehat{\bm{v}}$ is contained in the convex hull of the rows of the contingency matrix $P(\bm{p})$ corresponding to the point $\bm{p}$.

\begin{figure}[tb]
    \centering
    \subfloat[]{\includegraphics[width=0.18\textwidth]{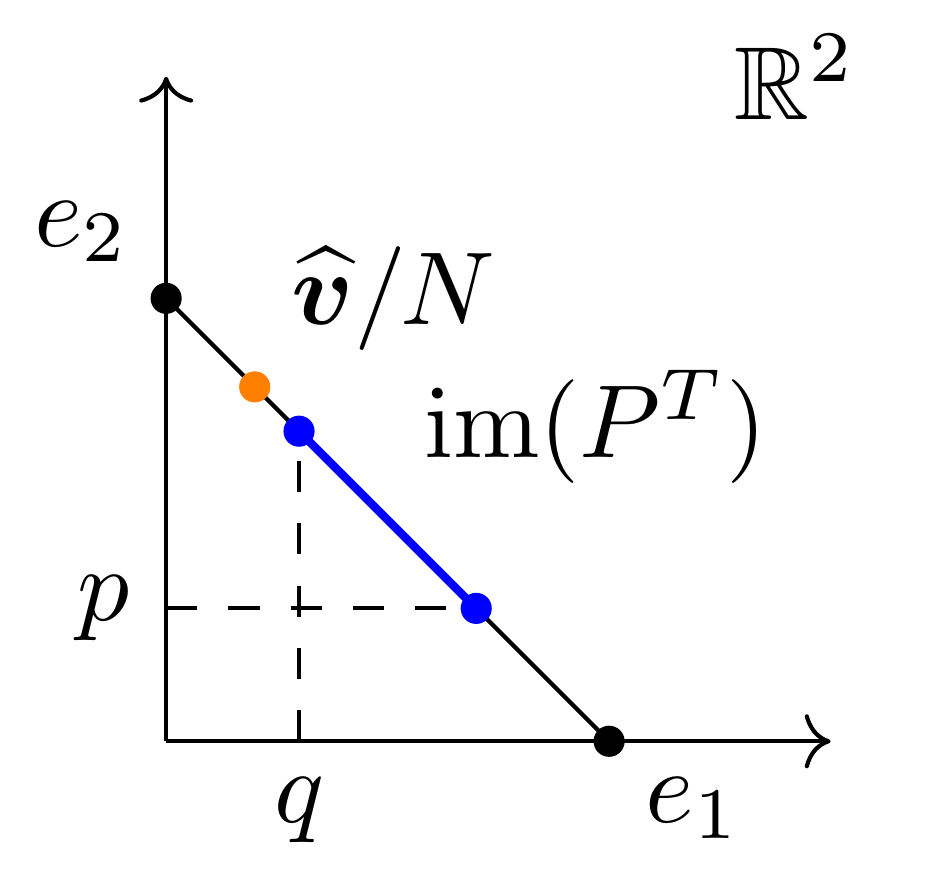}}\hspace{2cm}%
    \subfloat[]{\includegraphics[width=0.18\textwidth]{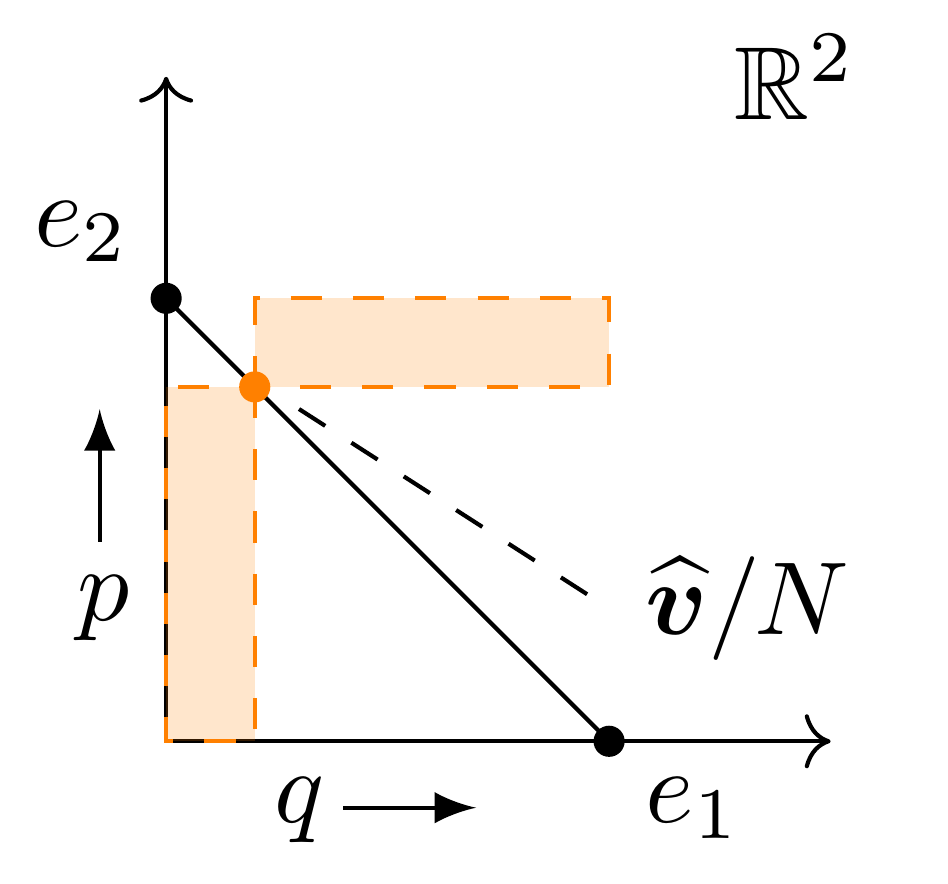}}
    \caption{\textit{The geometric relations between the contingency matrix $P$, the  estimated counts vector $\widehat{\bm{v}}$ and the size $N$ of the unseen data are shown in panel (a), for $K=2$.
    Panel (b) shows the constraints on the model parameters $p$ and $q$ that we impose.}}
    \label{fig_methods_bayes_restr}
\end{figure}

\vspace{\baselineskip} \noindent \textbf{The Binary Case.} In the binary case of $K=2$, the proposed parameter constraints take on the elegant form
\begin{equation}\label{eq_restr_binary}
    \Theta_2(\widehat{\bm{v}}) \cong ([0,\widehat v_2] \times [0, \widehat v_1]) \cup ([\widehat v_2, 1] \times [\widehat v_1,1]).
\end{equation}
To prove equation~\eqref{eq_restr_binary} algebraically, one solves the two linear equations $Q(\bm{p}) \widehat{\bm{v}} \geq \bm{0}$ for $p=p_{10}$ and $q=p_{01}$. Instead of doing so, we prefer a more insightful geometrical proof, see Figure~\ref{fig_methods_bayes_restr}, in which $e_1$ and $e_2$ correspond to the standard basis vectors of $\mathbb{R}^2$. Note that the predicted base rate vector $\widehat{\bm{\beta}} = \widehat{\bm{v}}/N$ (orange dot) lies on the 1-simplex $\Delta_1 \subset \mathbb{R}^2$. The blue line segment in Figure~\ref{fig_methods_bayes_restr}, panel (a), shows the image of $\Delta_1$ under $P^T$ and the corresponding locations of $p$ and $q$ on the axes. It shows precisely what we found in The Peculiar Example: the predicted counts vector does not lie in the image of $\Delta_1$ under the \textit{estimated} transposed contingency matrix $P^T$.
Now, Theorem~\ref{thm_base_rate_variance} shows that it is unlikely that the predicted base rate vector $\widehat{\bm{\beta}}$ is not contained in the image of the \textit{true} transposed contingency matrix.
In Figure~\ref{fig_methods_bayes_restr}, panel (a), we thus impose that we need a blue endpoint of $\text{im}(P^T)$ on each side of the orange dot $\widehat{\bm{v}}/N$. It follows that the orange area in Figure~\ref{fig_methods_bayes_restr}, panel (b), contains the permitted $(q,p)$-pairs and thus corresponds to $\Theta_2(\widehat{\bm{v}})$. This concludes the geometrical proof of equation~\eqref{eq_restr_binary}.

\subsection{Bayesian Aggregates.}\label{sec_methods_agg}
In \cite{van_delden_accuracy_2016}, it was shown that $\E[\widehat{\bm{u}}] = P^T\bm{u}$. Hence, an unbiased estimator for $\bm{u}$ would be $Q\widehat{\bm{u}}$ with $Q = (P^T)^{-1}$. Recall that we refer to this method as the baseline method. The baseline method does not yet take the uncertainty in estimating the contingency matrix $P$ into account. In The Peculiar Example, we have seen that this might lead to impermissible outcomes for small test sets. We propose the following three-step approach, assuming that a classification algorithm has already been trained and used to predict the classifications on the entire unseen dataset. In step~1, choose a prior from the Dirichlet family, such as the Jeffreys prior in Proposition~\ref{prop_jeffreys}, and find the posterior distribution of $\bm{p}$ using Theorem~\ref{thm_like_post} (integrating $\bm{\beta}$ out of the equation). Alternatively, MCMC methods can be used to obtain a numerical approximation of the posterior distribution, also for non-conjugate (but proper) priors. In step~2, take draws from the posterior distribution and only accept the draws that lie within $\Theta_K(\widehat{\bm{v}})$. Choose a positive integer $R$ (resolution parameter) and stop step 2 after $R$ draws have been obtained. In step~3, compute the matrix $Q$ and the product $Q\widehat{\bm{u}}$, for each of the $R$ draws. These three steps give a numerical approximation (of resolution $R$) to the posterior distribution of the estimator of the aggregate vector $\bm{u}$, conditional on $\widehat{\bm{u}}$.

We conclude by noting that the time complexity of the proposed Bayesian method in $n$ and $N$ is equal to that of the baseline method if conjugate priors are considered, as it allows direct sampling from the posterior distribution. In fact, if wall-clock time is considered, the entire estimation can be easily implemented to finish in tens of milliseconds on any regular machine. Using MCMC methods, the time complexity depends on the prior choice and might increase with $n$, but not with $N$. The wall-clock time for our bias correction method using MCMC methods will increase to several minutes.


\section{Empirical Evaluation}\label{sec_exper}

We begin by briefly revisiting The Peculiar Example from Section~\ref{sec_intro}. Then, we introduce real-world data on company tax returns and use it to compare existing methods to our bias correction method.

\subsection{The Peculiar Example.}\label{sec_res_pec_ex}
Recall The Peculiar Example from Section~\ref{sec_intro}. The baseline method resulted in an estimate of $-75$ webshops. The result of our bias correction method, with Jeffreys prior and parameter constraints, is shown in Figure~\ref{fig_res_comp_freq_pec}. The posterior mean of the number of items in the webshop class is now equal to $5.0$. The distribution is skewed (to the right) and the estimator still has a large variance, indicating that more labeled data is required to obtain a more accurate estimate. However, our bias correction method \textit{is} able to capture the little information available in the small test set of size $n=10$, resulting in a useful estimate.

\begin{figure}[tb]
    \centering
    \includegraphics[width = 0.5\textwidth]{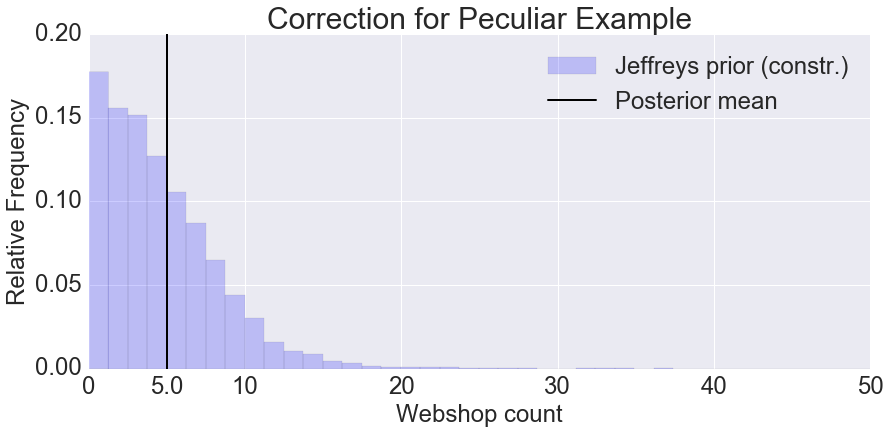}
    \caption{\textit{The results of our method (with Jeffreys prior and parameter constraints) for The Peculiar Example.}}
    \label{fig_res_comp_freq_pec}
\end{figure}

\begin{figure}[tb]
    \centering
    \includegraphics[width=0.5\textwidth]{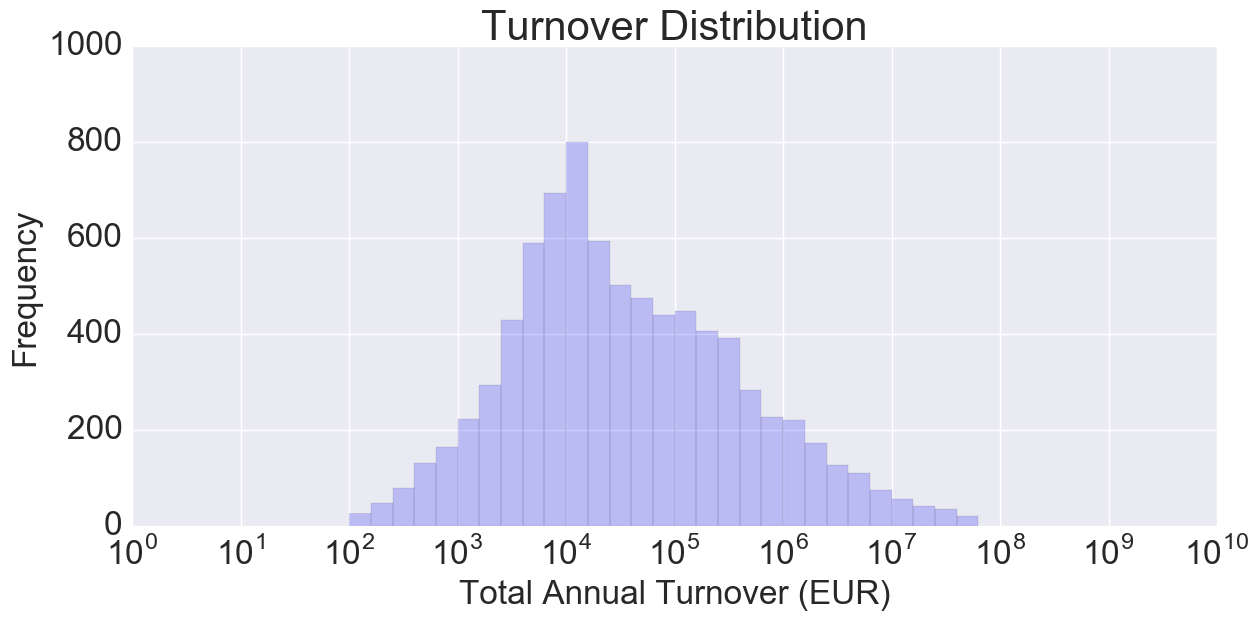}
    \caption{\textit{The distribution of the turnover variable in a dataset of filed tax returns. Figure adapted from \cite{meertens_data-driven_2018}.}}
    \label{fig_res_data}
\end{figure}

\subsection{Data on Company Turnover.}\label{sec_res_data}
To show the strength of our bias correction method in a general setting, we will now empirically evaluate our bias correction method using a real-world dataset of company tax returns. The goal is to estimate the total annual turnover of webshops. The binary classification (webshop or not) is not available in the data, but the annual turnover $y_i$ for each company $i$ is. This application is exhaustive, because the numerical variable $y$ (company turnover) is not constant (i.e., $y_i$ depends on $i$).

\begin{figure}[tb]
    \centering
    \subfloat[$n = 50$]{\includegraphics[width=0.5\textwidth]{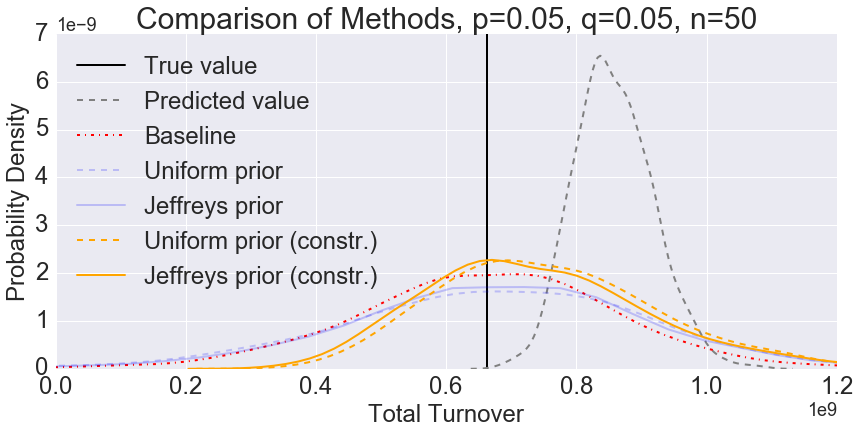}}\\
    \subfloat[$n=2,000$]{\includegraphics[width=0.5\textwidth]{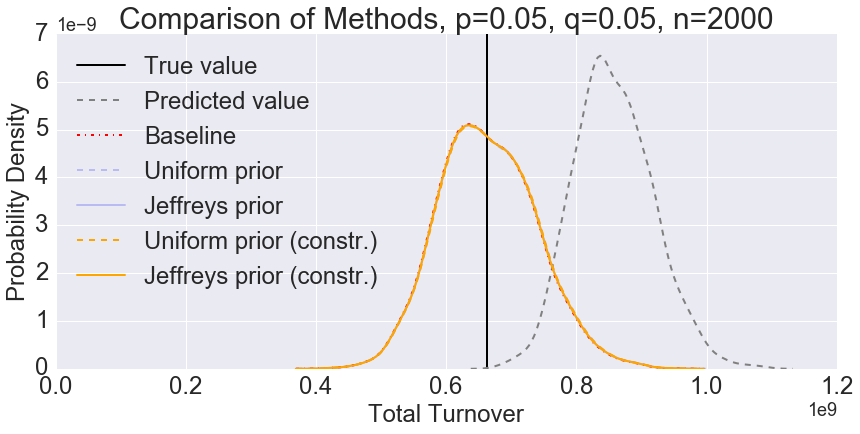}}
    \caption{\textit{The distribution of the mean of the posterior distribution of total turnover for different priors and different test set sizes.}}
    \label{fig_res_bias_comparison}
\end{figure}


The dataset, earlier studied in \cite{meertens_data-driven_2018}, contains tax returns filed in the Netherlands in 2016 by a set $I$ of $N~=~18,\!939$ companies established outside the Netherlands, but within the European Union. The total turnover reported by these companies for 2016 equals EUR~12.2~billion. Figure~\ref{fig_res_data} shows how companies' total annual turnover is distributed in the dataset.

In \cite{meertens_data-driven_2018}, a classification algorithm was trained (on a separate dataset) to predict whether a company was active as a webshop (class 1) or not (class 0). Without discussing the details of training it, we run the classification algorithm on the full dataset of size $N$, obtaining the predicted classifications $\widehat{s}_i \in \{1, 0\}$.

\subsection{Bias Correction and Effect of Prior Beliefs.}\label{sec_res_corr}
We compare the accuracy of our bias correction method to that of the baseline method and that of Bayesian methods without parameter constraints, when applied to estimate the total turnover of webshops in the given dataset. To compute the accuracy of the estimators, we have to know the true classification $s_i$ (webshop or not) for each company. As the true classifications are unknown, the \textit{only} way to examine the accuracy is by means of simulation. For the purpose of this simulation, we take the predicted classifications from \cite{meertens_data-driven_2018} as the true classifications $s_i$. The base rate of the webshop class then equals $\beta_1 = 0.075$.

Next, we take the binary classification error model with the true classification error rates $p = 0.05$ and $q = 0.05$ as the data generating process. We make this choice of $p$ and $q$, because it leads to a sufficiently large bias of classification-based aggregates (as $q/(p+q) = 0.5 \gg 0.075 = \beta_1$) making Figure \ref{fig_res_bias_comparison} more pleasant to read. We emphasize that other choices of $p$ and $q$ will lead to similar conclusions. 

\begin{table}
    \scriptsize{}
    \begin{tabular}{lrrrrrr}
        \toprule
         & \multicolumn{3}{c}{$n=50$} & \multicolumn{3}{c}{$n=2,\!000$}\\
        \cmidrule(r){2-4}\cmidrule(r){5-7}
         & Bias & Var & MSE & Bias & Var & MSE \\
         \textbf{Method} & $\times 10^6$ & $\times 10^{15}$ & $\times 10^{15}$ & $\times 10^6$ & $\times 10^{15}$ & $\times 10^{15}$ \\
        \midrule
        None & 195,3 & 3,7 & 41,8 & 195,3 & 3,7 & 41,8 \\
        Baseline & -1,3 & 46,3 & 46,3 & 1,3 & 5,4 & 5,4 \\
        \midrule
        Unif. & 10,8 & 72,7 & 72,8 & 3,1 & 5,5 & 5,5 \\
        Jeffr. & 22,0 & 81,7 & 82,2 & 2,3 & 5,5 & 5,5 \\
        \midrule
        Unif. con. & 105,1 & 35,8 & 46,9 & 3,1 & 5,5 & 5,5 \\
        Jeffr. con. & 88,5 & 39,0 & 46,9 & 2,3 & 5,5 & 5,5 \\
        \bottomrule
    \end{tabular}
    \caption{\textit{A comparison of the bias, variance and mean squared error (MSE) of each of the bias correction methods (including no correction) when estimating webshop turnover.}}
    \label{tab_res_bias_comparison}
\end{table}

The results of our Monte Carlo simulation are shown in Figure~\ref{fig_res_bias_comparison}, including the true value of webshop turnover $u_1$ (black, solid) and the distribution of the predicted value $\widehat{u}_1$ without performing any corrections (black, dashed). Besides our bias correction method (orange, dashed and solid), the figure shows the baseline method (red, dashed) and existing Bayesian bias correction methods without parameter constraints (blue, dashed and solid). The distributions are obtained by drawing $1,\!000$ bootstrap replications from the classification error model and following the approach proposed in Section~\ref{sec_methods_agg} for each bootstrap replication. Panel (a) shows the results for a test set of size $n=50$ and panel (b) shows the results for $n = 2,\!000$.

To facilitate a more rigorous comparison of the methods, the same results are summarized in Table~\ref{tab_res_bias_comparison}. We make four observations from the results. First, for $n=50$, our bias correction method achieves a considerable reduction of mean squared error (MSE) compared to existing Bayesian methods without parameter constraints. The effect of imposing parameter constraints diminishes for $n=2,\!000$. Second, for $n=2,\!000$, our bias correction method performs equally well (in terms of MSE) as the baseline method and existing Bayesian methods. Third, also for $n=2,\!000$, all bias correction methods are a huge improvement (in terms of MSE) compared to performing no bias correction; the bias substantially decreases without increasing the variance too much.  Fourth, our bias correction method decreases the variance compared to the baseline method, essentially by cutting off a large part of the support (cf. Figure \ref{fig_res_bias_comparison}, panel (a)). Even though this slightly increases MSE, it guarantees that impermissible estimates are never found, as we have illustrated in Section~\ref{sec_res_pec_ex}.

Finally, we note that a dataset of size $N~=~18,\!939$ is rather small for data mining applications nowadays. However, the base rate example and equation~\eqref{eq:main_problem} showed that the relative bias does not depend on $N$. Therefore, the experimental results that we have found for the dataset of company tax returns will generalize to (much) larger datasets in other applications.


\section{Conclusion}\label{sec_concl}

In this paper, we have studied the statistical bias of classification-based aggregates in the general setting of multi-class classification. We proposed a Bayesian bias correction method, being the first to derive and impose constraints on the classification error rates.

For small test sets, imposing these parameter constraints dismisses impermissible estimates, leading to similar MSE as the baseline method and reduced MSE compared to existing Bayesian methods. For larger test sets, all bias correction methods yield similar results, substantially reducing the MSE of classification-based aggregates. The improvement of our method compared to existing methods is particularly compelling if the number of labeled data points is small. We argue this to be relevant in many data mining applications, including sentiment analysis and land cover mapping.

As future work, we aim to reduce MSE even further by studying the bias-variance trade-off in classification-based aggregates in more detail, and by relaxing the assumption of homogeneity of contingency matrices.


\section*{Acknowledgements}

This research has been funded by Statistics Netherlands and the Dutch Ministry of Economic Affairs. We express our gratitude to Arnout van Delden, Sander Scholtus and Danny van Elswijk for insightful discussions and for useful comments on an earlier version of the text.

\bibliographystyle{siam}

\bibliography{bayesian}

\end{document}